\DeclareMathOperator*{\argmax}{arg\,max}
\theoremstyle{definition}
\newtheorem{definition}{Definition}[section]
\newtheorem{theorem}{Theorem}
\title{Training Generative Adversarial Networks from Incomplete Observations \\ using Factorised Discriminators}
\author{Daniel Stoller %
\\
Queen Mary University \\
London, UK \\
\texttt{d.stoller@qmul.ac.uk} \\
\And
Sebastian Ewert \\
Spotify \\
Berlin, Germany \\
\texttt{sewert@spotify.com}
\And
Simon Dixon %
\\
Queen Mary University \\
London, UK \\
\texttt{s.e.dixon@qmul.ac.uk} \\
}
\begin{document}

\maketitle

\begin{abstract}
Generative adversarial networks (GANs) have shown great success in applications such as image generation and inpainting.
However, they typically require large datasets, which are often not available, especially in the context of prediction tasks such as image segmentation that require labels.
Therefore, methods such as the CycleGAN use more easily available unlabelled data, but do not offer a way to leverage additional labelled data for improved performance.
To address this shortcoming, we show how to factorise the joint data distribution into a set of lower-dimensional distributions along with their dependencies.
This allows splitting the discriminator in a GAN into multiple ``sub-discriminators'' that can be independently trained from incomplete observations. 
Their outputs can be combined to estimate the density ratio between the joint real and the generator distribution, which enables training generators as in the original GAN framework.
We apply our method to image generation, image segmentation and audio source separation, and obtain improved performance over a standard GAN when additional incomplete training examples are available.
For the Cityscapes segmentation task in particular, our method also improves accuracy by an absolute $14.9\%$ over CycleGAN while using only $25$ additional paired examples.
\end{abstract}

\section{Introduction}
\label{sec:introduction}

In generative adversarial networks (GANs)~\citep{goodfellowGenerativeAdversarial2014} a generator network is trained to produce samples from a given target distribution. 
To achieve this, a discriminator network is employed to distinguish between ``real'' samples from the dataset and ``fake'' samples from the generator network.
The discriminator's feedback is used by the generator to improve its output.
While GANs have become highly effective at synthesising realistic examples even for complex data such as natural images~\citep{radfordUnsupervisedRepresentation2015,karrasStyleBasedGenerator2018}, 
they typically rely on large training datasets.
These are not available in many cases, especially for prediction tasks such as audio source separation~\citep{stollerAdversarialSemiSupervised2018} or image-to-image translation~\citep{zhuUnpairedImagetoimage2017}.
Instead, one often encounters many incomplete observations, such as unpaired images in image-to-image translation, or isolated source recordings in source separation.
However, standard GANs cannot be trained with these observations.
Recent approaches that work with unpaired data can not make use of additional paired data~\citep{zhuUnpairedImagetoimage2017} or lead to computational overhead due to additional  generators and discriminators that model the inverse of the mapping of interest~\citep{almahairiAugmentedCycleGAN2018,ganTriangleGenerative2017}.
For training the generator, multiple losses are combined whose interactions are not clear and that do not guarantee that the generator converges to the desired distribution.

In this paper, we adapt the standard GAN framework to enable training predictive models with both paired and unpaired data as well as generative models with incomplete observations.
To achieve this, we split the discriminator into multiple ``marginal'' discriminators, each modelling a separate set of dimensions of the input.
As this modification on its own would ignore any dependencies between these parts, we incorporate two additional ``dependency discriminators'', each focusing only on inter-part relationships.
We show how the outputs from these marginal and dependency discriminators can be recombined and used to estimate the same density ratios as in the original GAN framework -- which enables training any generator network in an unmodified form.
In contrast to previous GANs, our approach only requires full observations to train the smaller dependency discriminator and can leverage much bigger, simpler datasets to train the marginal discriminators, which enables the generator to model the marginal distributions more accurately.
Additionally, prior knowledge about the marginals and dependencies can be incorporated into the architecture of each discriminator.
Deriving from first principles, we obtain a consistent adversarial learning framework without the need for extra losses that rely on more assumptions or conflict with the GAN objective.

In our experiments, we apply our approach (``FactorGAN'')
\footnote{Code available at~\url{https://github.com/f90/FactorGAN}}
to two image generation tasks (Sections~\ref{sec:paired_mnist} and~\ref{sec:image_pairs}), image segmentation (Section~\ref{sec:image_segmentation}) and audio source separation (Section~\ref{sec:audio_source_separation}), and observe improved performance in missing data scenarios compared to a GAN.
For image segmentation, we also compare to the CycleGAN~\citep{zhuUnpairedImagetoimage2017}, which does not require images to be paired with their segmentation maps.
By leveraging both paired and unpaired examples with a unified adversarial objective, we achieve a substantially higher segmentation accuracy even with only $25$ paired samples than GAN and CycleGAN models.

\section{Method}
\label{sec:method}

After a brief summary of GANs in Section~\ref{sec:gan}, we introduce our method from a missing data perspective in Section~\ref{sec:missing_data}, before extending it to conditional generation (Section~\ref{sec:cond_gen}) and the case of independent outputs (Section~\ref{sec:independent_marginals}).

\subsection{Generative adversarial networks}
\label{sec:gan}

To model a probability distribution $p_x$ over $\mathbf{x} \in \mathbb{R}^d$, we follow the standard GAN framework and introduce a generator model $G_{\phi} : \mathbb{R}^n \rightarrow \mathbb{R}^d$ that maps an $n$-dimensional input $\mathbf{z} \sim p_z$ to a $d$-dimensional sample $G_{\phi}(\mathbf{z})$, resulting in the generator distribution $q_x$.
To train $G_{\phi}$ such that $q_x$ approximates the real data density $p_x$,
a discriminator $D_{\theta} : \mathbb{R}^d \rightarrow (0, 1)$ 
is trained to estimate whether a given sample is real or generated:
\begin{equation}
\argmax_{\theta}\ \mathbb{E}_{\mathbf{x} \sim p_x} \log D_{\theta}(\mathbf{x}) + \mathbb{E}_{\mathbf{x} \sim q_x} \log (1 - D_{\theta}(\mathbf{x})).
\label{eq:disc_loss}
\end{equation}
In the non-parametric limit~\citep{goodfellowGenerativeAdversarial2014}, $D_{\theta}(\mathbf{x})$ approaches $\tilde{D}(\mathbf{x}) := \frac{p_x(\mathbf{x})}{p_x(\mathbf{x}) + q_x(\mathbf{x})}$ at every point $\mathbf{x}$.
The generator is updated based on the discriminator's estimate of $\tilde{D}(\mathbf{x})$.
In this paper, we use the alternative loss function for $G_{\phi}$ as proposed by~\citet{goodfellowGenerativeAdversarial2014}:
\begin{equation}
\argmax_{\theta}\ \mathbb{E}_{\mathbf{z} \sim p_z} \log D_{\theta}(G_{\phi}(\mathbf{z})).
\label{eq:gen_loss}
\end{equation}

\subsection{Adaptation to missing data}
\label{sec:missing_data}

In the following we consider the case that incomplete observations are available in addition to our regular dataset (i.e.\,simpler yet larger datasets).
In particular, we partition the set of $d$ input dimensions of $\mathbf{x}$ into $K$ ($2 \leq K \leq d$) non-overlapping subsets $\mathcal{D}_1, \ldots, \mathcal{D}_K$. %
For each $i \in \{1,\ldots,K\}$, an incomplete (``marginal'') observation $\mathbf{x}^i$ can be drawn from $p_x^i$, which is obtained from $p_x$ after marginalising out all dimensions not in $\mathcal{D}_{i}$.
Analogously, $q_x^i$ denotes the $i$-th marginal distribution of the generator $G_{\phi}$.
Next, we extend the existing GAN framework such we can employ  the additional incomplete observations. 
In this context, a main hurdle is that a standard GAN discriminator is trained with  samples from the full joint $p_x$.
To eliminate this restriction, we note that $\tilde{D}(\mathbf{x})$ can be mapped to a ``joint density ratio'' $\frac{p_x(\mathbf{x})}{q_x(\mathbf{x})}$ by applying the bijective function $h:[0,1) \rightarrow \mathbb{R}^{+}, h(a) = -\frac{a}{a-1}$.
For our approach, we exploit that this joint density ratio can be factorised into a product of density ratios:
\begin{equation}
\begin{split}
\label{eq:joint_density}
h(\tilde{D}(\mathbf{x})) &=
\frac{p_x(\mathbf{x})}{q_x(\mathbf{x})} = \frac{c_P(\mathbf{x})}{c_Q(\mathbf{x})} \prod_{i=1}^K \frac{p_x^i(\mathbf{x}^i)}{q^i_x(\mathbf{x}^i)}\ \textrm{with} \\
c_P(\mathbf{x}) &= \frac{p_x(\mathbf{x})}{\prod_{i=1}^K p_x^i(\mathbf{x}^i)}\ \; \textrm{and}\ \;  c_Q(\mathbf{x}) = \frac{q_x(\mathbf{x})}{\prod_{i=1}^K q^i_x(\mathbf{x}^i)}.
\end{split}
\end{equation}
Each ``marginal density ratio'' $\frac{p_x^i(\mathbf{x}^i)}{q^i_x(\mathbf{x}^i)}$ captures the generator's output quality for one marginal variable~$\mathbf{x}^i$, while the $c_P$ and $c_Q$ terms describe the dependency structure between marginal variables in the real and generated distribution, respectively.
Note that our theoretical considerations assume that the densities $p_x$ and $q_x$ are non-zero everywhere.
While this might not be fulfilled in practice, our implementation does not directly compute density ratios and instead relies on the same assumptions as~\citet{goodfellowGenerativeAdversarial2014}.
We can estimate each density ratio independently by training a ``sub-discriminator'' network, and combine their outputs to estimate $\tilde{D}(\mathbf{x})$, as shown below.

\paragraph{Estimating the marginal density ratios:}
To estimate $\frac{p_x^i(\mathbf{x}^i)}{q^i_x(\mathbf{x}^i)}$ for each $i \in \{1,\ldots,K\}$, we train a ``marginal discriminator network'' $D_{\theta_i} : \mathbb{R}^{|\mathcal{D}_i|} \rightarrow (0,1)$ with parameters $\theta_i$ to determine whether a marginal sample $\mathbf{x}^i$ is real or generated following the GAN discriminator loss in Equation~\eqref{eq:disc_loss}~\footnote{Samples are drawn from $p_x^i$ and $q_x^i$ instead of $p_x$ and $q_x$, respectively.}.
This allows making use of the additional incomplete observations. 
In the non-parametric limit, $D_{\theta_i}(\mathbf{x}^i)$ will approach $\tilde{D}_i(\mathbf{x}^i) := \frac{p_x^i(\mathbf{x}^i)}{p_x^i(\mathbf{x}^i) + q^i_x(\mathbf{x}^i)}$, so that we can use $h(D_{\theta_i}(\mathbf{x}^i))$ as an estimate of $\frac{p_x^i(\mathbf{x}^i)}{q^i_x(\mathbf{x}^i)}$.

\paragraph{Estimation of $c_P(\mathbf{x})$ and $c_Q(\mathbf{x})$:}
Note that $c_P$ and $c_Q$ are also density ratios, this time containing a distribution over $\mathbf{x}$ in both the numerator and denominator -- the main difference being that in the latter the individual parts $\mathbf{x}^i$ are independent from each other. To approximate the ratio $c_P$, we can apply the same principles as above and train a ``p-dependency discriminator''~$D^P_{\theta_P} : \mathbb{R}^d \rightarrow (0,1)$ to distinguish samples from the two distributions, i.e.\, to discriminate real joint samples from samples where the individual parts are real but were drawn independently of each other (i.e. the individual parts might not originate from the same real joint sample).
Again, in the non-parametric limit, its response approaches $\tilde{D}^P(\mathbf{x}) := \frac{p_x(\mathbf{x})}{p_x(\mathbf{x}) + \prod_{i=1}^K p_x^i(\mathbf{x}^i)}$ and thus $c_P$ can be approximated via $h \circ D^P_{\theta_P}$.
Analogously, the $c_Q$ term is estimated with a ``q-dependency discriminator'' $D^Q_{\theta_Q}$ -- here, we compare joint generator samples with samples where the individual parts were shuffled across several generated samples (to implement the independence assumption).

\paragraph{Joint discriminator sample complexity:}
In contrast to $c_Q$, where the generator provides an infinite number of samples, estimating $c_P$ without overfitting to the limited number of joint training samples can be challenging.
While standard GANs suffer from the same difficulty, our factorisation into specialised sub-units allows for additional opportunities to improve the sample complexity.
In particular, we can design the architecture of the p-dependency discriminator to incorporate prior knowledge about the dependency structure\footnote{If only certain features of a marginal variable influence the dependencies, we can limit the input to the p-dependency discriminator to these features instead of the full marginal sample to prevent overfitting.}.

\paragraph{Combining the discriminators:}
As the marginal and the p- and q-dependency sub-discriminators provide estimates of their respective density ratios, we can multiply them and apply $h^{-1}$ to obtain the desired ratio $\tilde{D}(\mathbf{x})$, following Equation~\eqref{eq:joint_density}.
This can be implemented in a simple and stable fashion using a linear combination of pre-activation sub-discriminator outputs followed by a sigmoid (see Section~\ref{sec:disc_combination} for details and proof).
The time for a generator update step grows linearly with the number of marginals $K$, assuming the time to update each of the $K$ marginal discriminators remains constant.

\subsection{Adaptation to conditional generation}
\label{sec:cond_gen}

Conditional generation, such as image segmentation or inpainting, can be performed with GANs by using a generator $G_{\phi}$ that maps a conditional input $\mathbf{x}^1$ and noise to an output $\mathbf{x}^2$, resulting in an output probability $q_{\phi}(\mathbf{x}^2|\mathbf{x}^1)$.

When viewing $\mathbf{x}^1$ and $\mathbf{x}^2$ as parts of a joint variable $\mathbf{x} := (\mathbf{x}^1, \mathbf{x}^2)$ with distribution $p_x$, we can also frame the above task as matching $p_{x}$ to the joint generator distribution $q_{x}(\mathbf{x}) := p^1_x(\mathbf{x}^1)q_{\phi}(\mathbf{x}^2|\mathbf{x}^1)$.
In a standard conditional GAN, the discriminator is asked to distinguish between joint samples from $p_x$ and $q_x$, which requires \emph{paired} samples from $p_x$ and is inefficient as the inputs $\mathbf{x}^1$ are the same in both $p_x$ and $q_x$.
In contrast, applying our factorisation principle from Equation~\eqref{eq:joint_density} to $\mathbf{x}^1$ and $\mathbf{x}^2$ (for the special case $K=2$) yields 
\begin{equation}
\frac{p_x(\mathbf{x})}{q_x(\mathbf{x})} = \frac{\frac{p_x(\mathbf{x})}{p_x^1(\mathbf{x}^1)p_x^2(\mathbf{x}^2)}}{\frac{q_x(\mathbf{x})}{q_x^1(\mathbf{x}^1)q_x^2(\mathbf{x}^2)}} \frac{p_x^2(\mathbf{x}^2)}{q_x^2(\mathbf{x}^2)} = 
\frac{c_P(\mathbf{x})}{c_Q(\mathbf{x})} \frac{p_x^2(\mathbf{x}^2)}{q_x^2(\mathbf{x}^2)},
\label{eq:conditional}
\end{equation}
suggesting the use of a p- and a q-dependency discriminator to model the input-output relationship, and a marginal discriminator over $\mathbf{x}^2$ that matches aggregate generator predictions from $q ^2_x$ to real output examples from $p^2_x$.
Note that we do not need a marginal discriminator for $\mathbf{x}^1$, which increases computational efficiency.
This adaptation can also involve additionally partitioning $\mathbf{x}^2$ into multiple partial observations as shown in Equation~\ref{eq:joint_density}.

\subsection{Adaption to independent marginals}
\label{sec:independent_marginals}

In case the marginals can be assumed to be completely independent, one can remove the p-dependency discriminator from our framework, since $c_P(\mathbf{x}) = 1$ for all inputs $\mathbf{x}$.
This approach can be useful in the conditional setting, when each output is related to the input but their marginals are independent from each other.
In this setting, our method is related to adversarial ICA~\citep{brakelLearningIndependent2017}.
Note that the q-dependency discriminator still needs to be trained on the full generator outputs if the generator should not introduce unwanted dependencies between the marginals.

\subsection{Further extensions}

There are many more ways of partitioning the joint distribution into marginals. We discuss two additional variants (\emph{Hierarchical and auto-regressive FactorGANs}) of our approach in Section~\ref{sec:extensions}.

\section{Related work}
\label{sec:related_work}

For conditional generation, ``CycleGAN''~\citep{zhuUnpairedImagetoimage2017} exploits unpaired samples by assuming a one-to-one mapping between the domains and using bidirectional generators (along with~\citet{ganTriangleGenerative2017}), while FactorGAN makes no such assumptions and instead uses paired examples to learn the dependency structure.
\citet{almahairiAugmentedCycleGAN2018} and~\citet{tripathyLearningImagetoimage2018} learn from paired examples with an additional reconstruction-based loss, but use a sum of many different loss terms which have to be balanced by additional hyper-parameters. Additionally, it can not be applied to generation tasks with missing data or prediction tasks with multiple outputs.
\citet{brakelLearningIndependent2017} perform independent component analysis in an adversarial fashion using a discriminator to identify correlations. 
Similarly to our q-dependency discriminator, the separator outputs are enforced to be independent, but our method is fully adversarial and can model arbitrary dependencies with the p-dependency discriminator.
GANs were also used for source separation, but dependencies were either ignored~\citep{zhangUnsupervisedAudio2017} or modelled with an additional L2 loss~\citep{stollerAdversarialSemiSupervised2018} that supports only deterministic separators.

\citet{puJointGANMultiDomain2018} use GANs for joint distribution modelling by training a generator for each possible factorisation of the joint distribution, but this requires $K!$ generators for $K$ marginals, whereas we assume either all parts or exactly one part of the variable of interest is observed to avoid functional redundancies between the different networks.
\citet{karaletsosAdversarialMessage2016} propose adversarial inference on local factors of a high-dimensional joint distribution and factorise both generator and discriminator based on independence assumptions given by a Bayesian network, whereas we keep a joint sample generator and model all dependencies.
Finally, \citet{yoonGAINMissing2018} randomly mask the inputs to a GAN generator so it learns to impute missing values, whereas our generator aims to learn a transformation where inputs are fully observed.

\section{Experiments}
\label{sec:experiments}

To validate our method, we compare our FactorGAN with the regular GAN approach, both for unsupervised generation as well as supervised prediction tasks.
For the latter, we also compare to the CycleGAN~\citep{zhuUnpairedImagetoimage2017} as an unsupervised baseline.
To investigate whether FactorGAN makes efficient use of all observations, we vary the proportion of the training samples available for joint sampling (paired), while using the rest to sample from the marginals (unpaired).
We train all models using a single NVIDIA GTX 1080 GPU.

\paragraph{Training procedure}

For stable training, we employ spectral normalisation~\citep{miyatoSpectralNormalization2018} on each discriminator network to ensure they satisfy a Lipschitz condition.
Since the overall output used for training the generator is simply a linear combination of the individual discriminators~(see Section~\ref{sec:disc_combination}), the generator gradients are also constrained in magnitude accordingly.
Unless otherwise noted, we use an Adam optimiser with learning rate $10^{-4}$ and a batch size of $25$ for training all models.
We perform two discriminator updates after each generator update.

\subsection{Paired MNIST}
\label{sec:paired_mnist}

Our first experiment will involve ``Paired MNIST'', a synthetic dataset of low complexity whose dependencies between marginals can be easily controlled. More precisely, we generate a paired version of the original MNIST dataset\footnote{\url{http://yann.lecun.com/exdb/mnist/}} by creating samples that contain a pair of vertically stacked digit images.
With a probability of $\lambda$, the lower digit chosen during random generation is the same as the upper one, and different otherwise.
For FactorGAN, we model the distributions of upper and lower digits as individual marginal distributions ($K=2$).

\paragraph{Experimental setup}

We compare the normal GAN with our FactorGAN, also including a variant without p-dependency discriminator that assumes marginals to be independent (``FactorGAN-no-cp'').
We conduct the experiment with $\lambda=0.1$ and $\lambda=0.9$ and also vary the amount of training samples available in paired form, while keeping the others as marginal samples only usable by FactorGAN.
For both generators and discriminators, we used simple multi-layer perceptrons~(Tables~\ref{table:gen_fc} and \ref{table:disc_fc}).

To evaluate the quality of generated digits, we adopt the ``Frechét Inception Distance'' (FID) as metric~\citep{heuselGANsTrained2017}.
It is based on estimating the distance between the distributions of hidden layer activations of a pre-trained Imagenet object detection model for real and fake examples. To adapt the metric to MNIST data, we pre-train a classifier to predict MNIST digits~(see Table~\ref{table:mnist_classifier}) on the training set for $20$ epochs, obtaining a test accuracy of $98\%$.
We input the top and bottom digits in each sample separately to the classifier and collect the activations from the last hidden layer (FC1) to compute FIDs for the top and bottom digits, respectively.
We use the average of both FIDs to measure the overall output quality of the marginals (lower value is better).

Since the only dependencies in the data are digit correlations controlled by $\lambda$, we can evaluate how well FactorGAN models these dependencies.
We compute $p_D(D_t,D_b)$ as the probability for a real sample to have digit $D_t \in \{0,\ldots,9\}$ at the top and digit $D_b \in \{0,\ldots,9\}$ at the bottom, along with marginal probabilities $p_D^t(D_t)$ and $p_D^b(D_b)$ (and analogously $q_D(D_t,D_b)$ for generated data).
Since we do not have ground truth digit labels for the generated samples, we instead use the class predicted by the pre-trained classifier. 
We encode the dependency as a ratio between a joint and the product of its marginals, where the ratios for real and generated data are ideally the same.
Therefore, we take their absolute difference for all digit combinations as evaluation metric (lower is better):
\begin{equation}
    d_{\text{dep}} = \frac{1}{100} \sum_{D_t=0}^{9} \sum_{D_b=0}^{9} \Bigm| \frac{p_D(D_t, D_b)}{p_D^t(D_t) p_D^b(D_b)} - \frac{q_D(D_t, D_b)}{q_D^t(D_t) q_D^b(D_b)} \Bigm|.
    \label{eq:diff_metric}
\end{equation}
Note that the metric computes how well dependencies in the real data are modelled by a generator, but not whether it introduces any additional unwanted dependencies such as top and bottom digits sharing stroke thickness, and thus presents only a necessary condition for a good generator.

\paragraph{Results}

\begin{figure}[t]
    \centering
    \begin{subfigure}[t]{0.5\textwidth}
        \centering
        \includegraphics[width=1.0\textwidth]{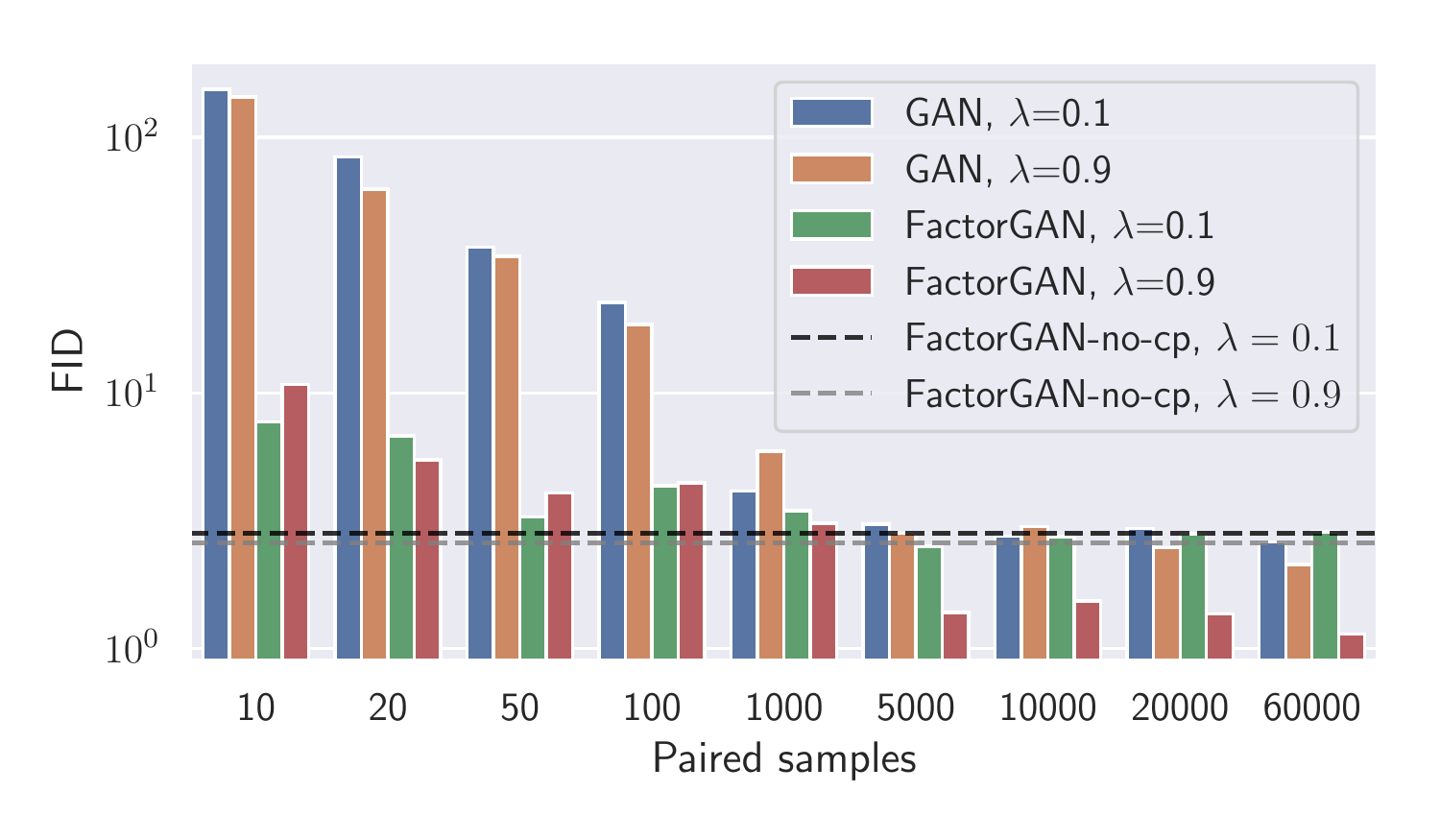}
        \caption{FID value, averaged over both digits}
        \label{fig:mnist_fid}
    \end{subfigure}%
    ~ 
    \begin{subfigure}[t]{0.5\textwidth}
        \centering
        \includegraphics[width=1.0\textwidth]{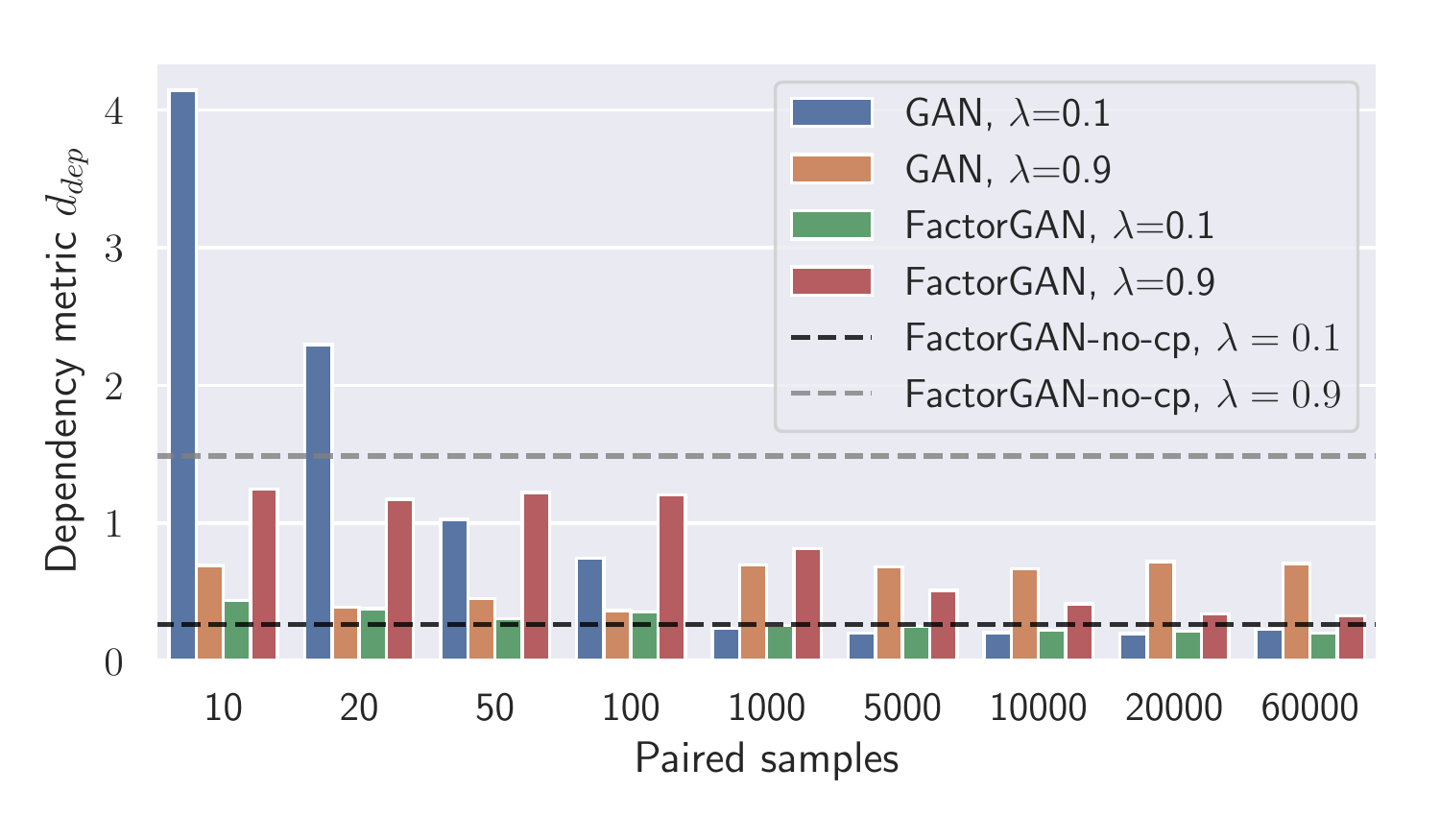}
        \caption{Dependency metric}
        \label{fig:mnist_dep}
    \end{subfigure}
    \caption{Performance with different numbers of paired training samples and settings for $\lambda$ compared between GAN and FactorGAN with and without dependency modelling.}
    \label{fig:mnist}
\end{figure}

The results of our experiment are shown in Figure~\ref{fig:mnist}.
Since FactorGAN-no-cp trains on all samples independently of the number of paired observations, both FID and $d_{\text{dep}}$ are constant.
As expected, FactorGAN-no-cp delivers good digit quality, and performs well for $\lambda=0.1$ (as it assumes independence) and badly for $\lambda=0.9$ with regards to dependency modelling.

FactorGAN outperforms GAN with small numbers of paired samples in terms of FID by exploiting the additional unpaired samples, although this gap closes as both models eventually have access to the same amount of data.
FactorGAN also consistently improves in modelling the digit dependencies with an increasing number of paired observations.
For $\lambda=0.1$, this also applies to the normal GAN, although its performance is much worse for small sample sizes as it introduces unwanted digit dependencies.
Additionally, its performance appears unstable for $\lambda=0.9$, where it achieves the best results for a small number of paired examples.
Further improvements in this setting could be gained by incorporating prior knowledge about the nature of these dependencies into the p-dependency discriminator to increase its sample efficiency, but this is left for future work.

\subsection{Image pair generation}
\label{sec:image_pairs}

In this section, we use GAN and FactorGAN for generating pairs of images in an unsupervised way to evaluate how well FactorGAN models more complex data distributions.

\paragraph{Datasets}

We use the ``Cityscapes'' dataset~\citep{cordtsCityscapesDataset2016} and the ``Edges2Shoes'' dataset~\citep{isolaImagetoImageTranslation2016}.
To keep the outputs in a continuous domain, we treat the segmentation maps in the Cityscapes dataset as RGB images, instead of a set of discrete categorical labels.
Each input and output image is downsampled to $64 \times 64$ pixels as a preprocessing step to reduce computational complexity and to ensure stable GAN training.

\paragraph{Experimental setup}

We define the distributions of input as well as output images as marginal distributions.
Therefore, FactorGAN uses two marginal discriminators and a p- and q-dependency discriminator.
All discriminators employ a convolutional architecture shown in Table~\ref{table:disc_conv} with $W=6$ and $H=6$.
To control for the impact of discriminator size, we also train a GAN with twice the number of filters in each discriminator layer to match its size with the combined size of the FactorGAN discriminators.
The same convolutional generator shown in Table~\ref{table:gen_conv} is used for GAN and FactorGAN.
Each image pair is concatenated along the channel dimension to form one sample, so that $C=6$ for the Cityscapes and $C=4$ for the Edges2Shoes dataset (since edge maps are greyscale).
We make either $100$, $1000$, or all training samples available in paired form, to investigate whether FactorGAN can improve upon GAN by exploiting the remaining unpaired samples or match its quality if there are none.

For evaluation, we randomly assign $80\%$ of validation data to a ``test-train'' and the rest to a ``test-test'' partition.
We train an LSGAN discriminator~\citep{maoLeastSquares2017} with the architecture shown in Table~\ref{table:disc_conv} (but half the filters in each layer) on the test-train partition for $40$ epochs to distinguish real from generated samples, before measuring its loss on the test set.
We continuously sample from the generator during training and testing instead of using a fixed set of samples to better approximate the true generator distribution.
As evaluation metric, we use the average test loss over $10$ training runs, which was shown to correlate with subjective ratings of visual quality~\citep{imQuantitativelyEvaluating2018} and also with our own quality judgements throughout this study.
A larger value indicates better performance, as we use a flipped sign compared to~\citet{imQuantitativelyEvaluating2018}.
While the quantitative results appear indicative of output quality, accurate GAN evaluation is still an open problem and so we encourage the reader to judge generated examples given in Section~\ref{sec:examples}.

\paragraph{Results}

\begin{figure}[t]
    \centering
    \includegraphics[width=1.0\textwidth]{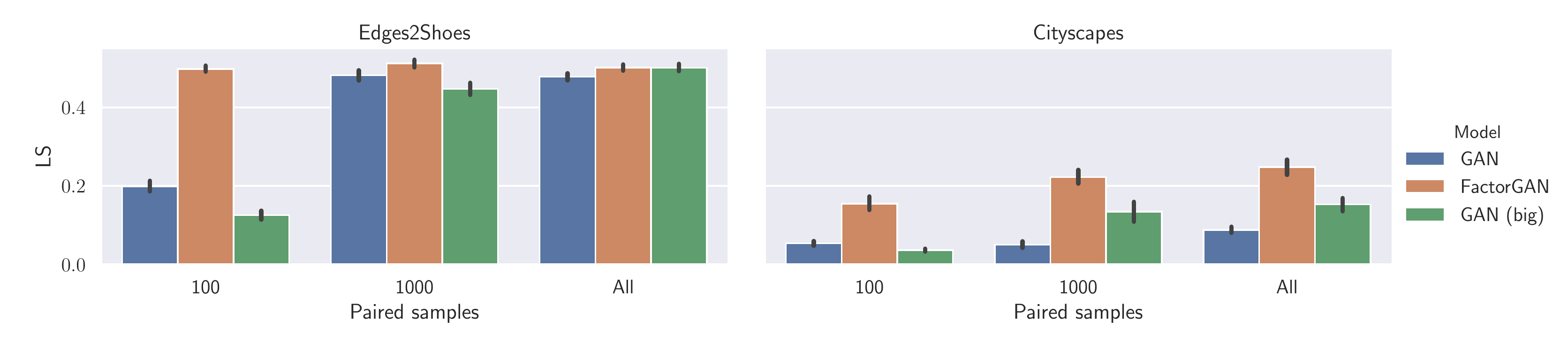}
    \caption{GAN and FactorGAN output quality estimated by the LS metric for different datasets and numbers of paired samples. Error bars show 95\% confidence intervals.}
    \label{fig:imagepairs_LS}
\end{figure}

\begin{figure}[t]
    \centering
    \begin{subfigure}[t]{0.45\textwidth}
        \centering
        \includegraphics[width=1.0\textwidth]{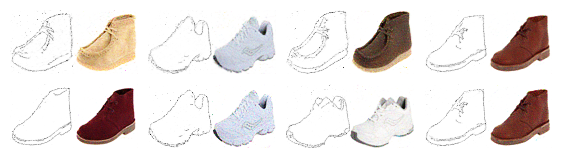}
        \caption{GAN}
        \label{fig:imagepairs_gan}
    \end{subfigure}%
    \hfill
    \begin{subfigure}[t]{0.45\textwidth}
        \centering
        \includegraphics[width=1.0\textwidth]{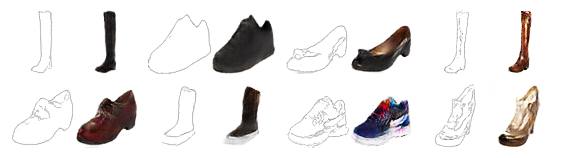}
        \caption{FactorGAN}
        \label{fig:imagepairs_depgan}
    \end{subfigure}
    \caption{Examples generated for the Edges2Shoes dataset using $100$ paired samples}
    \label{fig:imagepairs}
\end{figure}

Our FactorGAN achieves better or similar output quality compared to the GAN baseline in all cases, as seen in Figure~\ref{fig:imagepairs_LS}.
For the Edges2Shoes dataset, the performance gains are most pronounced for small numbers of paired samples. 
On the more complex Cityscapes dataset, FactorGAN outperforms GAN by a large margin independent of training set size, even when the discriminators are closely matched in size.
This suggests that FactorGAN converges with fewer training iterations for $G_{\phi}$, although the exact cause is unclear and should be investigated in future work.

We show some generated examples in Figure~\ref{fig:imagepairs}.
Due to the small number of available paired samples, we observe a strong mode collapse of the GAN in Figure~\ref{fig:imagepairs_gan}, while FactorGAN provides high-fidelity, diverse outputs, as shown in Figure~\ref{fig:imagepairs_depgan}.
Similar observations can be made for the Cityscapes dataset when using 100 paired samples (see Section~\ref{sec:appendix_imagepairs}).

\subsection{Image segmentation}
\label{sec:image_segmentation}

Our approach extends to the case of conditional generation~(see Section~\ref{sec:cond_gen}), so we tackle a complex and important image segmentation task on the Cityscapes dataset, where we ask the generator to predict a segmentation map for a city scene (instead of generating both from scratch as in Section~\ref{sec:image_pairs}).

\paragraph{Experimental setup}

We downsample the scenes and segmentation maps to $128 \times 128$ pixels and use a U-Net architecture~\citep{ronnebergerUnetConvolutional2015} (shown in Table~\ref{table:gen_unet}  with $W=7$ and $C=3$) as segmentation model.
For FactorGAN, we use one marginal discriminator to match the distribution of real and fake segmentation maps to ensure realistic predictions, which enables training with isolated city scenes and segmentation maps.
To ensure the correct predictions for each city scene, a p- and a q-dependency discriminator learns the input-output relationship using joint samples, both employing a convolutional architecture shown in Table~\ref{table:disc_conv}.
Note that as in Section~\ref{sec:image_pairs}, we output segmentation maps in the RGB space instead of performing classification.
In addition to the MSE in the RGB space, we compute the widely used pixel-wise classification accuracy~\citep{cordtsCityscapesDataset2016} by assigning each output pixel to the class whose colour has the lowest Euclidean distance in RGB space.

Using the same experimental setup (including network architectures), we also implement the CycleGAN~\citep{zhuUnpairedImagetoimage2017} as an unsupervised baseline.
For the CycleGAN objective, the same GAN losses as shown in~\eqref{eq:disc_loss} and~\eqref{eq:gen_loss} are used\footnote{Code to perform one training iteration and default loss weights taken from the official codebase at \\ \url{https://github.com/junyanz/pytorch-CycleGAN-and-pix2pix}}.

\paragraph{Results}

\begin{figure}[t]
    \centering
        \includegraphics[width=1.0\textwidth]{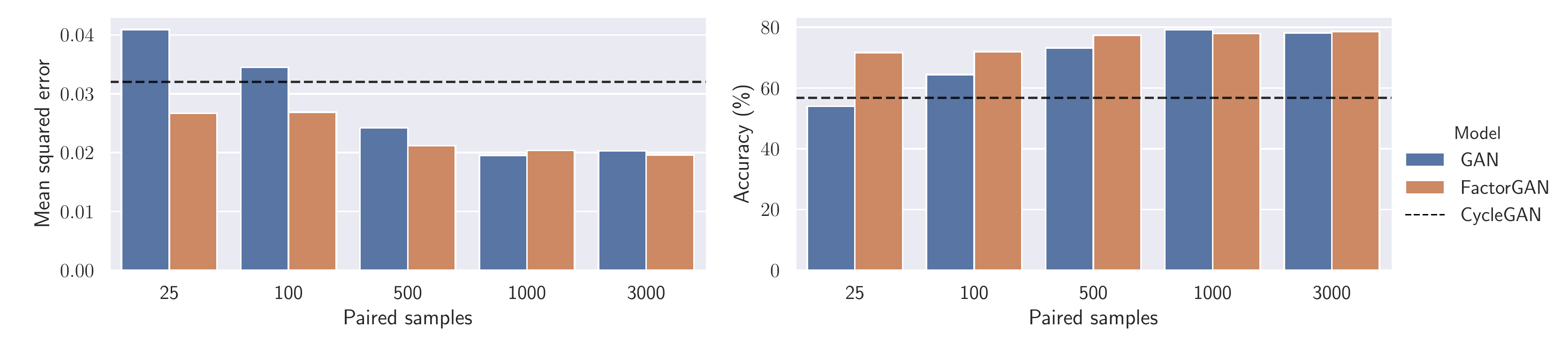}
    \caption{MSE (left) and accuracy (right) obtained on the Cityscapes dataset with different numbers of paired training samples for the GAN and FactorGAN}
    \label{fig:cityscapes}
\end{figure}

\begin{figure}[t]
    \centering
    \begin{subfigure}[t]{0.5\textwidth}
        \centering
        \includegraphics[width=1.0\textwidth]{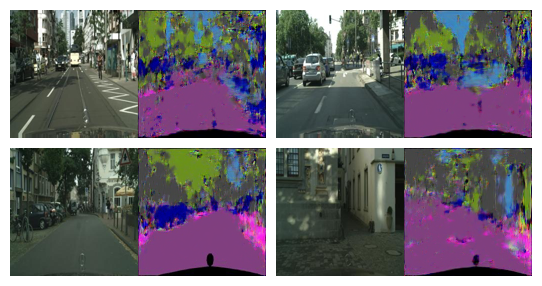}
        \caption{GAN}
        \label{fig:cityscapes_gens_gan}
    \end{subfigure}%
    ~ 
    \begin{subfigure}[t]{0.5\textwidth}
        \centering
        \includegraphics[width=1.0\textwidth]{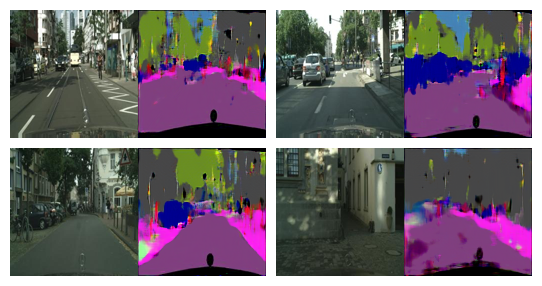}
        \caption{FactorGAN}
        \label{fig:cityscapes_gens_depgan}
    \end{subfigure}
    \caption{Segmentation predictions made on the Cityscapes dataset for the same set of test inputs, compared between models, using $100$ paired samples for training}
    \label{fig:cityscapes_examples}
\end{figure}

The results in Figure~\ref{fig:cityscapes} demonstrate that our approach can exploit additional unpaired samples to deliver better MSE and accuracy than a GAN and less noisy outputs as seen in Figure~\ref{fig:cityscapes_examples}.
When using only 25 paired samples, FactorGAN reaches $71.6\%$ accuracy, outperforming both GAN and CycleGAN by an absolute $17.7\%$ and $14.9\%$, respectively. 
CycleGAN performs better than GAN only in this setting, and increasingly falls behind both GAN and FactorGAN with a growing number of paired samples, likely since GAN and FactorGAN are able to improve their input-output mapping gradually while CycleGAN remains reliant on its cycle consistency assumption.
These findings suggest that FactorGAN can efficiently learn the dependency structure from few paired samples with more accuracy than a CycleGAN that is limited by its simplistic cycle consistency assumption.

\subsection{Audio source separation}
\label{sec:audio_source_separation}

We apply our method to audio source separation as another conditional generation task to investigate whether it transfers across domains.
Specifically, we separate music signals into singing voice and accompaniment, as detailed in Section~\ref{sec:separation_details}.
As in Section~\ref{sec:image_segmentation}, we find that FactorGAN provides better separtion than GAN, suggesting that our factorisation is useful across problem domains.

\section{Discussion}
\label{sec:discussion}

We find that FactorGAN outperforms GAN across all experiments when additional incomplete samples are available, especially when they are abundant in comparison to the number of joint samples.
When using only joint observations, FactorGAN should be expected to match the GAN in quality, and it does so quite closely in most of our experiments.
Surprisingly, it outperforms GAN in some scenarios such as image segmentation even with matched discriminator sizes -- a phenomenon we do not fully understand yet and should be investigated in the future.
For image segmentation, FactorGAN substantially improves segmentation accuracy compared to the fully unsupervised CycleGAN model even when only using $25$ paired examples, indicating that it can efficiently exploit the pairing information.

Since the p-dependency discriminator does not rely on generator samples that change during training, it could be pre-trained to reduce computation time, but this led to sudden training instabilities in our experiments.
We suspect that this is due to a mismatch between training and testing conditions for the p-dependency discriminator since it is trained on real but evaluated on fake data, and neural networks can yield overly confident predictions outside the support of the training set~\citep{galDropoutBayesian2016}.
Therefore, we expect classifiers with better uncertainty calibration to alleviate this issue.

\section{Conclusion}
\label{sec:conclusion}

In this paper, we demonstrated how a joint distribution can be factorised into a set of marginals and dependencies, giving rise to the FactorGAN -- a GAN in which the discriminator is split into parts that can be independently trained with incomplete observations.
For both generation and conditional prediction tasks in multiple domains, we find that FactorGAN outperforms the standard GAN when additional incomplete observations are available.
For Cityscapes scene segmentation in particular, FactorGAN achieves a much higher accuracy than the supervised GAN as well as the unsupervised CycleGAN, while requiring only $25$ of all examples to be annotated.

Factorising discriminators enables incorporating more prior knowledge into the design of neural architectures in GANs, which could improve empirical results in applied domains.
The presented factorisation is generally applicable independent of model choice, so it can be readily integrated into many existing GAN-based approaches.
Since the joint density can be factorised in different ways, multiple extensions are conceivable depending on the particular application (as shown in  Section~\ref{sec:extensions}).
This paper derives FactorGAN from the original GAN proposed by~\citet{goodfellowGenerativeAdversarial2014} by exploiting the probabilistic view of the optimal discriminator.
Adapting the FactorGAN to alternative GAN objectives (such as the Wasserstein GAN~\citep{arjovskyWassersteinGAN2017}) might be possible as well.
Instead of relying on additional techniques such as spectral normalisation to ensure training stability, which our theory does not explicitly incorporate, this would enable the use of an inherently more stable GAN variant with the same theoretical guarantees.

\subsubsection*{Acknowledgments}
We thank Emmanouil Benetos for his helpful feedback.
Daniel Stoller is funded by EPSRC grant EP/L01632X/1.

\bibliography{Research}
\bibliographystyle{iclr2020_conference}

\clearpage

\appendix
\section{Appendix}

\subsection{Tables}
\label{sec:tables}

\begin{table}[ht]
\footnotesize
\caption{The architecture of our generator on the MNIST dataset. All layers have biases.
}
\label{table:gen_fc}
\begin{center}
\begin{tabular}{cccccc} 
 \toprule
  Layer & Input shape & Outputs & Output shape & Activation \\
 \midrule
 FC & $50$ & $128$ & $128$ & ReLU \\ 
 FC & $128$ & $128$ & $128$ & ReLU \\ 
 FC & $128$ & $1568$ & $56 \times 28 \times 1$ & Sigmoid \\ 
 \bottomrule
\end{tabular}
\end{center}
\end{table}

\begin{table}[ht]
\footnotesize
\caption{The architecture of our discriminators on the paired MNIST dataset. $W=28$ for marginal, $W=56$ for dependency discriminators.
}
\label{table:disc_fc}
\begin{center}
\begin{tabular}{cccccc} 
 \toprule
  Layer & Input shape & Outputs & Output shape & Activation \\
 \midrule
 FC & $W \cdot 28$ & $128$ & $128$ & LeakyReLU \\ 
 FC & $128$ & $128$ & $128$ & LeakyReLU \\ 
 FC & $128$ & $1$ & $1$ & - \\ 
 \bottomrule
\end{tabular}
\end{center}
\end{table}

\begin{table*}[ht]
\footnotesize
\caption{The architecture of our MNIST classifier. Dropout with probability $0.5$ is applied to FC1 outputs.
}
\label{table:mnist_classifier}
\begin{center}
\begin{tabular}{ccccccc} 
 \toprule
  Layer & Input shape & Filter size & Stride & Outputs & Output shape & Activation \\
 \midrule
 Conv & $28 \times 28 \times 1$ & $5\times 5$ & $1\times 1$ & $10$ & $28 \times 28 \times 10$ & - \\
 AvgPool & $28 \times 28 \times 10$ & $2 \times 2$ & $2\times 2$ & $10$ & $12 \times 12 \times 10$ & LeakyReLU \\
 Conv & $12 \times 12 \times 10$ & $5\times 5$ & $1\times 1$ & $20$ & $12 \times 12 \times 20$ & - \\
 AvgPool & $12 \times 12 \times 20$ & $2 \times 2$ & $2\times 2$ & $20$ & $4 \times 4 \times 20$ & LeakyReLU \\
 FC1 & $320$ & - & - & $50$ & $50$ & LeakyReLU \\
 FC2 & $50$ & - & - & $10$ & $10$ & - \\
 \bottomrule
\end{tabular}
\end{center}
\end{table*}

\begin{table*}[h!]
\footnotesize
\caption{The architecture of our convolutional generator. ``ConvT'' represent transposed convolutions. All layers have biases. The number of output channels $C$ depends on the task.
}
\label{table:gen_conv}
\begin{center}
\begin{tabular}{ccccccc} 
 \toprule
  Layer & Input shape & Filter size & Stride & Outputs & Output shape & Activation \\
 \midrule
 ConvT & $1 \times 1 \times 50$ & $4\times 4$ & $1\times 1$ & $1024$ & $4 \times 4 \times 1024$ & ReLU \\
 ConvT & $4 \times 4 \times 1024$ & $4\times 4$ & $2\times 2$ & $512$ & $8 \times 8 \times 512$ & ReLU \\
 ConvT & $8 \times 8 \times 512$ & $4\times 4$ & $2\times 2$ & $256$ & $16 \times 16 \times 256$ & ReLU \\
 ConvT & $16 \times 16 \times 256$ & $4\times 4$ & $2\times 2$ & $128$ & $32 \times 32 \times 128$ & ReLU \\
 ConvT & $32 \times 32 \times 128$ & $4\times 4$ & $2 \times 2$ & $64$ & $64 \times 64 \times 64$ & ReLU \\
 Conv & $64 \times 64 \times 64$ & $4\times 4$ & $1 \times 1$ & $C$ & $64 \times 64 \times C$ & Sigmoid \\
 \bottomrule
\end{tabular}
\end{center}
\end{table*}

\begin{table*}[h!]
\footnotesize
\caption{The architecture of our convolutional discriminator. All layers except FC have biases. $W$, $H$ and $C$ are set for each task so that the dimensions of the input data are matched.
}
\label{table:disc_conv}
\begin{center}
\begin{tabular}{ccccccc} 
 \toprule
  Layer & Input shape & Filter size & Stride & Outputs & Output shape & Activation \\
 \midrule
 Conv & $2^W \times 2^H \times C$ & $4 \times 4$ & $2 \times 2$ & $32$ & $2^{W-1} \times 2^{H-1} \times 32$ & LeakyReLU \\
 Conv & $2^{W-1} \times 2^{H-1} \times 32$ & $4 \times 4$ & $2 \times 2$ & $64$ & $2^{W-2} \times 2^{H-2} \times 64$ & LeakyReLU \\
 Conv & $2^{W-2} \times 2^{H-2} \times 64$ & $4 \times 4$ & $2 \times 2$ & $128$ & $2^{W-3} \times 2^{H-3} \times 128$ & LeakyReLU \\
 Conv & $2^{W-3} \times 2^{H-3} \times 128$ & $4 \times 4$ & $2 \times 2$ & $256$ & $2^{W-4} \times 2^{H-4} \times 256$ & LeakyReLU \\
 Conv & $2^{W-4} \times 2^{H-4} \times 256$ & $4 \times 4$ & $2 \times 2$ & $512$ & $2^{W-5} \times 2^{H-5} \times 512$ & LeakyReLU \\
 FC & $2^{W-5} \cdot 2^{H-5} \cdot 512$ & - & - & 1 & $1$ & LeakyReLU \\
 \bottomrule
\end{tabular}
\end{center}
\end{table*}

\begin{table}[t]
\footnotesize
\caption{The architecture of our U-Net. The height $H$ and number of input channels $C$ depends on the experiment. MP is maxpooling with stride 2. FC has noise as input. UpConv performs transposed convolution with stride $2$. Concat concatenates the current feature map with one from the downstream path. The final output is computed depending on the task~(see text for more details)
}
\label{table:gen_unet}
\begin{center}
\begin{tabular}{cccccc} 
 \toprule
  Layer & Input (shape) & Outputs & Output shape \\
 \midrule
 DoubleConv1 & $2^{W} \times 128 \times C$ & $32$ & $2^{W} \times 128 \times 32$ \\ %
 MP1 & $2^{W} \times 128 \times 32$ & $32$ & $2^{W-1} \times 64 \times 32$ \\
 DoubleConv2 & $2^{W-1} \times 64 \times 32$ & $64$ & $2^{W-1} \times 64 \times 64$ \\
 MP2 & $2^{W-1} \times 64 \times 64$ & $64$ & $2^{W-2} \times 32 \times 64$ \\
 DoubleConv3 & $2^{W-2} \times 32 \times 64$ & $64$ & $2^{W-2} \times 32 \times 128$ \\
 MP3 & $2^{W-2} \times 32 \times 128$ & $128$ & $2^{W-3} \times 16 \times 128$ \\
 DoubleConv4 & $2^{W-3} \times 16 \times 128$ & $256$ & $2^{W-3} \times 16 \times 256$ \\
 MP4 & $2^{W-3} \times 16 \times 256$ & $256$ & $2^{W-4} \times 8 \times 256$ \\
 DoubleConv5 & $2^{W-4} \times 8 \times 256$ & $256$ & $2^{W-4} \times 8 \times 256$ \\ %
 \midrule
 FC & $50$ & $2^{W-4} \cdot 16$ & $2^{W-4} \times 8 \times 2$ \\
 Concat & DoubleConv5 & - &  $2^{W-4} \times 8 \times 258$ \\
 \midrule
 UpConv & $2^{W-4} \times 8 \times 258$ & $256$ & $2^{W-3} \times 16 \times 258$ \\ %
 Concat & DoubleConv4 & $514$ & $2^{W-3} \times 16 \times 514$ \\ %
 Conv & $2^{W-3} \times 16 \times 514$ & $128$ & $2^{W-3} \times 16 \times 128$ \\
 \midrule
 UpConv & $2^{W-3} \times 16 \times 128$ & $128$ & $2^{W-2} \times 32 \times 128$ \\
 Concat & DoubleConv3 & $256$ & $2^{W-2} \times 32 \times 256$ \\
 Conv & $2^{W-2} \times 32 \times 256$ & $64$ & $2^{W-2} \times 32 \times 64$ \\
  \midrule
 UpConv & $2^{W-2} \times 32 \times 64$ & $64$ & $2^{W-1} \times 64 \times 64$ \\ %
 Concat & DoubleConv2 & $128$ & $2^{W-1} \times 64 \times 128$ \\ %
 Conv & $2^{W-1} \times 64 \times 128$ & $32$ & $2^{W-1} \times 64 \times 32$ \\
  \midrule
 UpConv & $2^{W-1} \times 64 \times 32$ & $32$ & $2^{W} \times 128 \times 32$ \\ %
 Concat & DoubleConv1 & $64$ & $2^{W} \times 128 \times 64$ \\ %
 Conv & $2^{W} \times 128 \times 64$ & $32$ & $2^{W} \times 128 \times 32$ \\
 \midrule
 Conv & $2^{W} \times 128 \times 32$ & $C$ & $2^{W} \times 128 \times C$ \\
 \bottomrule
\end{tabular}
\end{center}
\end{table}

\begin{table}[t]
\footnotesize
\caption{The DoubleConv neural network block used in the U-Net. Conv uses a $3 \times 3$ filter size.
}
\label{table:doubleconv}
\begin{center}
\begin{tabular}{ccccccc} 
 \toprule
  Layer & Input shape & Outputs & Output shape \\
 \midrule
 Conv & $W \times H \times C$ & $\frac{C}{2}$ & $W \times H \times \frac{C}{2}$ \\
 BatchNorm \& ReLU & $W \times H \times \frac{C}{2}$ & - & $W \times H \times \frac{C}{2}$ \\
 Conv & $W \times H \times \frac{C}{2}$ & $\frac{C}{2}$ & $W \times H \times \frac{C}{2}$ \\
 BatchNorm \& ReLU & $W \times H \times \frac{C}{2}$ & - & $W \times H \times \frac{C}{2}$ \\
 \bottomrule
\end{tabular}
\end{center}
\end{table}

\clearpage

\subsection{Audio source separation experiment}
\label{sec:separation_details}

For our audio source separation experiment, our generator $G_{\phi}$ takes a music spectrogram $\mathbf{m}$ along with noise $\mathbf{z}$ and maps it to an estimate of the accompaniment and vocal spectra $\mathbf{a}$ and $\mathbf{v}$, implicitly defining an output probability $q_{\phi}(\mathbf{a}, \mathbf{v} | \mathbf{m})$.
We define the joint real and generated distributions that should be matched as $p(\mathbf{m}, \mathbf{a}, \mathbf{v})$ and $q(\mathbf{m},\mathbf{a},\mathbf{v}) = q_{\phi}(\mathbf{a},\mathbf{v}|\mathbf{m})p(\mathbf{m})$.
Since the source signals in our dataset are simply added in the time-domain to produce the mixture, this approximately applies to the spectrogram as well, so we assume that $p(\mathbf{m}|\mathbf{a},\mathbf{v}) = \delta(\mathbf{m}-\mathbf{a}-\mathbf{v})$.
We can constrain our generator $G_{\phi}$ to make predictions that always satisfy this condition, thereby taking care of the input-output relationship manually, similarly to~\citet{sonderbyAmortisedMap2017}.
Instead of predicting the sources directly, a mask $\mathbf{b}$ with values in the range $[0,1]$ is computed, and the accompaniment and vocals are estimated as $\mathbf{b} \odot \mathbf{m}$ and $(\mathbf{b}-1) \odot \mathbf{m}$, respectively.
As a result, $q(\mathbf{m}|\mathbf{a},\mathbf{v}) = p(\mathbf{m}|\mathbf{a},\mathbf{v})$, so we can simplify the joint density ratio to %
\begin{equation}
    \frac{p(\mathbf{m},\mathbf{a},\mathbf{v})}{q(\mathbf{m},\mathbf{a},\mathbf{v})} =
    \frac{p(\mathbf{a},\mathbf{v}) p(\mathbf{m}|\mathbf{a},\mathbf{v})}{q(\mathbf{a},\mathbf{v}) q(\mathbf{m}|\mathbf{a},\mathbf{v})} =
    \frac{p(\mathbf{a},\mathbf{v})}{q(\mathbf{a},\mathbf{v})} =
     \frac{c_P(\mathbf{a},\mathbf{v})}{c_Q(\mathbf{a},\mathbf{v})}  \frac{p(\mathbf{a})}{q(\mathbf{a})} \frac{p(\mathbf{v})}{q(\mathbf{v})},
    \label{eq:source_sep}
\end{equation}
meaning that the discriminator(s) in the GAN and the FactorGAN only require $(\mathbf{a}, \mathbf{v})$ pairs, but not the mixture $\mathbf{m}$ as additional input, as the correct input-output relationship is already incorporated into the generator.
Furthermore, the last equality suggests a FactorGAN application with one marginal discriminator for each source along with dependency discriminators to model source dependencies.

\paragraph{Dataset}

We use MUSDB~\citep{rafiiMUSDB18Corpus2017} as multi-track dataset for our experiment, featuring 100 songs for training and 50 songs for testing.
Each song is downsampled to $22.05$ KHz before spectrogram magnitudes are computed, using an STFT with a $512$-sample window and a $256$-sample hop\footnote{This results in $257$ frequency bins but we discard the bin with the highest frequency to obtain a power of 2 and thus avoid padding issues in our network architectures.}.
Snippets with $128$ timeframes each are created by cropping each song's full spectrogram at regular intervals of $64$ timeframes.
Thus, the generator only separates snippets $\mathbf{m} \in \mathbb{R}_{\geq 0}^{256 \times 128}$ and outputs predictions of the same shape, %
however this does not change the derivation presented in Equation~\eqref{eq:source_sep}, and longer inputs at test time can be processed by partitioning them into snippets and concatenating the model predictions. 

\paragraph{Experimental setup}

For our generator, we use the U-Net architecture detailed in Table~\ref{table:gen_unet} with $W=8$ and $C=1$.
We use the convolutional discriminator described in Table~\ref{table:disc_conv} with $W=8$, $H=7$ and $C=1$.
The source dependency discriminators take two sources as input via concatenation along the channel dimension, so they use $C=2$.

In each experiment, we vary the number of training songs whose snippets are available for paired training between $10$, $20$ and $50$ and compare between GAN and FactorGAN.
The spectrograms predicted on the test set are converted to audio with the inverse STFT by reusing the phase from the mixture, and then evaluated using the signal-to-distortion ratio (SDR), a well-established evaluation metric for source separation~\citep{vincentPerformanceMeasurement2006}.

\paragraph{Results}

\begin{figure}[t]
    \centering
    \includegraphics[width=1.0\textwidth]{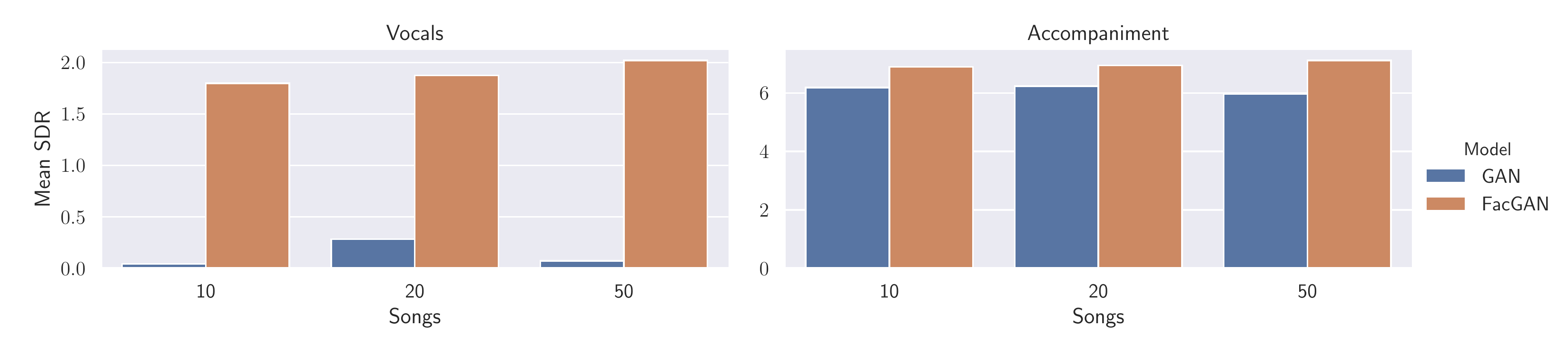}
    \caption{GAN and FactorGAN separation performance for different numbers of paired samples}
    \label{fig:sdr}
\end{figure}

Figure~\ref{fig:sdr} shows our separation results.
Compared to a GAN, the separation performance is significantly higher using FactorGAN.
As expected, FactorGAN improves slightly with more paired examples, which is not the case for the GAN -- here we find that the vocal output becomes too quiet when increasing the number of songs for training, possibly a sign of mode collapse.
Similarly to the results seen in the image pair generation experiments, we suspect that the FactorGAN discriminator might approximate the joint density $\tilde{D}(\mathbf{x})$ more closely than the GAN discriminator due to its use of multiple discriminators, although the reasons for this are not yet understood.

\subsection{Possible extensions}
\label{sec:extensions}

We can decompose the joint density ratio $\frac{p_x(\mathbf{x})}{q_x(\mathbf{x})}$ in other ways than shown in Equation~\ref{eq:joint_density} in the paper. In the following, we discuss two additional possibilities.

\subsubsection{Hierarchical FactorGAN}

The decomposition of the joint density ratio could be applied recursively, splitting the obtained marginals further into ``sub-marginals'' and their dependencies, which could be repeated multiple times.
In addition to training with incomplete observations where only a single part is given, this also allows making use of samples where only sub-parts of these parts are given and is thus more flexible than a single factorisation as used in the standard FactorGAN.

As a demonstration, we split each marginal $\mathbf{x}^i$ further into a group of $J_i$ marginals, $J_i \leq |\mathcal{D}_i|$, and their dependencies, without further recursion for simplicity:
\begin{equation}
\frac{p_x(\mathbf{x})}{q_x(\mathbf{x})} = 
\frac{c_P(\mathbf{x})}{c_Q(\mathbf{x})}
\prod_{i=1}^K \frac{p^i_x(\mathbf{x}^i)}{q^i_x(\mathbf{x}^i)} = 
\frac{c_P(\mathbf{x})}{c_Q(\mathbf{x})} \Bigg[ \prod_{i=1}^K \frac{c^i_P(\mathbf{x}^i)}{c^i_Q(\mathbf{x}^i)} \Bigg[ \prod_{j=1}^{J} \frac{p^{i,j}_x(\mathbf{x}^{i,j})}{q^{i,j}_x(\mathbf{x}^{i,j})} \Bigg] \Bigg].
\label{eq:hierarchical_density_ratio}
\end{equation}
$c^{i}_P$ and $c^{i}_Q$ are dependency terms analogously to $c_P$ and $c_Q$, but only defined on marginal variable $\mathbf{x}^i$, whose $J$ ``sub-marginals'' are denoted by $\mathbf{x}^{i,1},\ldots,\mathbf{x}^{i,J}$.

Such a hierarchical decomposition might also be beneficial if the data is known to be generated from a hierarchical process.
We leave the empirical exploration of this concept to future work.

\subsubsection{Autoregressive FactorGAN}

For a multi-dimensional variable $\mathbf{x} = [ \mathbf{x}^1,  \mathbf{x}^2, \ldots, \mathbf{x}^T ]$ composed of $T$ elements arranged in a sequence, such as time series data, the joint density ratio can also be decomposed in a causal, auto-regressive fashion:
\begin{align}
\frac{p_x(\mathbf{x})}{q_x(\mathbf{x})} &=
\frac{p^1_x(\mathbf{x}^1)}{q^1_x(\mathbf{x}^1)} \prod_{i=2}^T \frac{c_P(\mathbf{x}^1,\ldots,\mathbf{x}^i)}{c_Q(\mathbf{x}^1,\ldots,\mathbf{x}^i)} \frac{p_x(\mathbf{x}^i)}{q_x(\mathbf{x}^i)} \label{eq:autoregressive_gan} \\
&= \frac{p^1_x(\mathbf{x}^1)}{q^1_x(\mathbf{x}^1)} \prod_{i=2}^T \frac{p_x(\mathbf{x}_i | \mathbf{x}_1,\ldots,\mathbf{x}_{i-1})}{q_x(\mathbf{x}_i | \mathbf{x}_1,\ldots,\mathbf{x}_{i-1})} \label{eq:autoregressive}
\end{align}
Note that $c_P$ is defined here as $\frac{p(\mathbf{x})}{p(\mathbf{x}^1,\ldots,\mathbf{x}^{i-1})p(\mathbf{x}^i)}$ ($c_Q$ analogously using $q_x$).
Equation~\eqref{eq:autoregressive_gan} suggests an auto-regressive version of FactorGAN in which the generator output quality at each time-step $i$ is evaluated using a marginal discriminator that estimates $\frac{p_x(\mathbf{x}^i)}{q_x(\mathbf{x}^i)}$ combined with dependency discriminators that model the dependency between the current and all past time-steps.

The final product formulation in Equation~\eqref{eq:autoregressive} reveals a close similarity to auto-regressive models and suggests a modification of the normal GAN with an auto-regressive discriminator that rates an input at each time-step given the previous ones.
Using a derivation analogous to the one shown in Section~\ref{sec:disc_combination}, this implies taking the unnormalised discriminator outputs at each time-step, summing them, and applying a sigmoid non-linearity to obtain the overall estimate of the probability $\tilde{D}(\mathbf{x})$.
A similar implementation was used before in~\citet{mogrenCRNNGANContinuous2016}, attempting to stabilise GAN training with recurrent neural networks as discriminators, but for the first time, we provide a rigorous theoretical justification for this practice here.

\subsection{Discriminator combination}
\label{sec:disc_combination}

\begin{definition}{Sigmoid discriminator output.}
Let $D_{\theta_i}(\mathbf{x}^i) := \sigma(d_{\theta_i}(\mathbf{x}^i)), d_{\theta_i} : \mathbb{R}^{|\mathcal{D}_i|} \rightarrow \mathbb{R}$ for all $i \in \{1,\ldots,K\}$, analogously define $D^P_{\theta_P}(\mathbf{x})$ and $D^Q_{\theta_Q}(\mathbf{x})$.
\label{def:disc_output}
\end{definition}

\begin{definition}{Combined discriminator.}
Let $D^C(\mathbf{x}) := \sigma(d^P_{\theta_P}(\mathbf{x}) - d^Q_{\theta_Q}(\mathbf{x}) + \sum_{i=1}^K d_{\theta_i}(\mathbf{x}^i))$ be the output of the combined discriminator that is used for training $G_{\phi}$ using Equation~\ref{eq:gen_loss}.
\label{def:comb_output}
\end{definition}

\begin{theorem}{Combined discriminator approximates $\tilde{D}(\mathbf{x})$.}
Under definitions~\ref{def:disc_output} and~\ref{def:comb_output} and assuming optimally trained sub-discriminators, %
$D^C(\mathbf{x}) = \tilde{D}(\mathbf{x}) = \frac{p_x(\mathbf{x})}{p_x(\mathbf{x}) + q_x(\mathbf{x})}$.
\label{theorem:approx}
\end{theorem}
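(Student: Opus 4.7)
The plan is to reduce the theorem to a direct application of the factorisation identity in Equation~\eqref{eq:joint_density} by translating between the sigmoid output of each sub-discriminator and the density ratio it approximates. The central observation is that for any logit $z \in \mathbb{R}$, one has $h(\sigma(z)) = \sigma(z)/(1-\sigma(z)) = e^{z}$, so that the pre-activation of a sigmoid discriminator is exactly the \emph{log} density ratio it targets. Thus the problem collapses to showing that the sum of pre-activations used in Definition~\ref{def:comb_output} equals $\log(p_x(\mathbf{x})/q_x(\mathbf{x}))$, and then inverting with $\sigma$.

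Concretely, I would proceed in three steps. First, invoke the non-parametric optimality statements already established in Section~\ref{sec:missing_data}: under optimal training, $D_{\theta_i}(\mathbf{x}^i) = \tilde{D}_i(\mathbf{x}^i)$, $D^P_{\theta_P}(\mathbf{x}) = \tilde{D}^P(\mathbf{x})$, and $D^Q_{\theta_Q}(\mathbf{x}) = \tilde{D}^Q(\mathbf{x})$. Applying $h$ to each and using the identity $h(\sigma(z)) = e^{z}$, this gives
\begin{equation*}
e^{d_{\theta_i}(\mathbf{x}^i)} = \frac{p_x^i(\mathbf{x}^i)}{q_x^i(\mathbf{x}^i)}, \qquad e^{d^P_{\theta_P}(\mathbf{x})} = c_P(\mathbf{x}), \qquad e^{d^Q_{\theta_Q}(\mathbf{x})} = c_Q(\mathbf{x}),
\end{equation*}
so each pre-activation is the logarithm of the corresponding ratio. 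Second, substitute these into the linear combination inside $D^C$ and use Equation~\eqref{eq:joint_density} in its logarithmic form to conclude that $d^P_{\theta_P}(\mathbf{x}) - d^Q_{\theta_Q}(\mathbf{x}) + \sum_i d_{\theta_i}(\mathbf{x}^i) = \log(p_x(\mathbf{x})/q_x(\mathbf{x}))$. Third, apply $\sigma$ to both sides and verify the routine identity $\sigma(\log(p_x/q_x)) = p_x/(p_x + q_x) = \tilde{D}(\mathbf{x})$, which closes the argument.

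\paragraph{Where the real content sits.}
There is no genuine obstacle here: the factorisation in Equation~\eqref{eq:joint_density} does all the work, and the key insight is simply that putting a sigmoid on the outside of each sub-discriminator means the pre-activations live in log-ratio space, where products become sums. The one point that deserves care is matching conventions: one must check the sign on $d^Q_{\theta_Q}$ (it enters as a denominator in $c_P/c_Q$, hence the minus sign in Definition~\ref{def:comb_output}) and verify that the implicit non-zero density assumptions flagged after Equation~\eqref{eq:joint_density} are what allow taking logarithms throughout. Beyond these bookkeeping checks, the theorem is a one-line consequence of the factorisation identity together with the log-ratio interpretation of sigmoid logits.
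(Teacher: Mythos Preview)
Your proposal is correct and follows essentially the same approach as the paper: both arguments convert sigmoid outputs to density ratios via $h(\sigma(z))=e^{z}$ (equivalently, the paper writes $(1-D)/D=e^{-d}$), substitute the optimal sub-discriminator values, and collapse the product using Equation~\eqref{eq:joint_density}. Your log-space framing is merely a presentational variant of the paper's direct algebraic expansion of $\sigma$.
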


\begin{proof}{Proof of Theorem~\ref{theorem:approx} using Definitions~\ref{def:disc_output} and~\ref{def:comb_output}:}

\begin{equation}
\begin{split}
\label{eq:gan_impl}
& D^C(\mathbf{x}) \\
& = \sigma \left( d^P_{\theta_P}(\mathbf{x}) - d^Q_{\theta_Q}(\mathbf{x}) + \sum_{i=1}^K d_{\theta_i}(\mathbf{x}^i) \right) \\
&= \left( 1 + e^{-d^P_{\theta_P}(\mathbf{x})} e^{d^Q_{\theta_Q}(\mathbf{x})} \prod_{i=1}^K e^{-d_{\theta_i}(\mathbf{x}^i)} \right)^{-1}  \\
&= \left( 1 + \frac{1 - D^P_{\theta_P}(\mathbf{x})}{D^P_{\theta_P}(\mathbf{x})} \frac{D^Q_{\theta_Q}(\mathbf{x})}{1 - D^Q_{\theta_Q}(\mathbf{x})} \prod_{i=1}^K \frac{1 - D_{\theta_i}(\mathbf{x}^i)}{D_{\theta_i}(\mathbf{x}^i)} \right)^{-1}  \\
&= \left( 1 + \frac{\prod_{i=1}^K p_x(\mathbf{x}^i)}{p_x(\mathbf{x})} \frac{q_x(\mathbf{x})}{\prod_{i=1}^K q^i_x(\mathbf{x}^i)} \prod_{i=1}^K \frac{q^i_x(\mathbf{x}^i)}{p_x(\mathbf{x}^i)} \right)^{-1} \\
&= \left( 1 + \frac{q_x(\mathbf{x})}{p_x(\mathbf{x})} \right)^{-1} \\
&= \frac{p_x(\mathbf{x})}{p_x(\mathbf{x}) + q_x(\mathbf{x})}.
\end{split}
\end{equation}

\end{proof}

\newpage

\subsection{Generated examples}
\label{sec:examples}

\subsubsection{Paired MNIST}

\begin{figure}[h]
    \centering
    \includegraphics[width=1.0\textwidth]{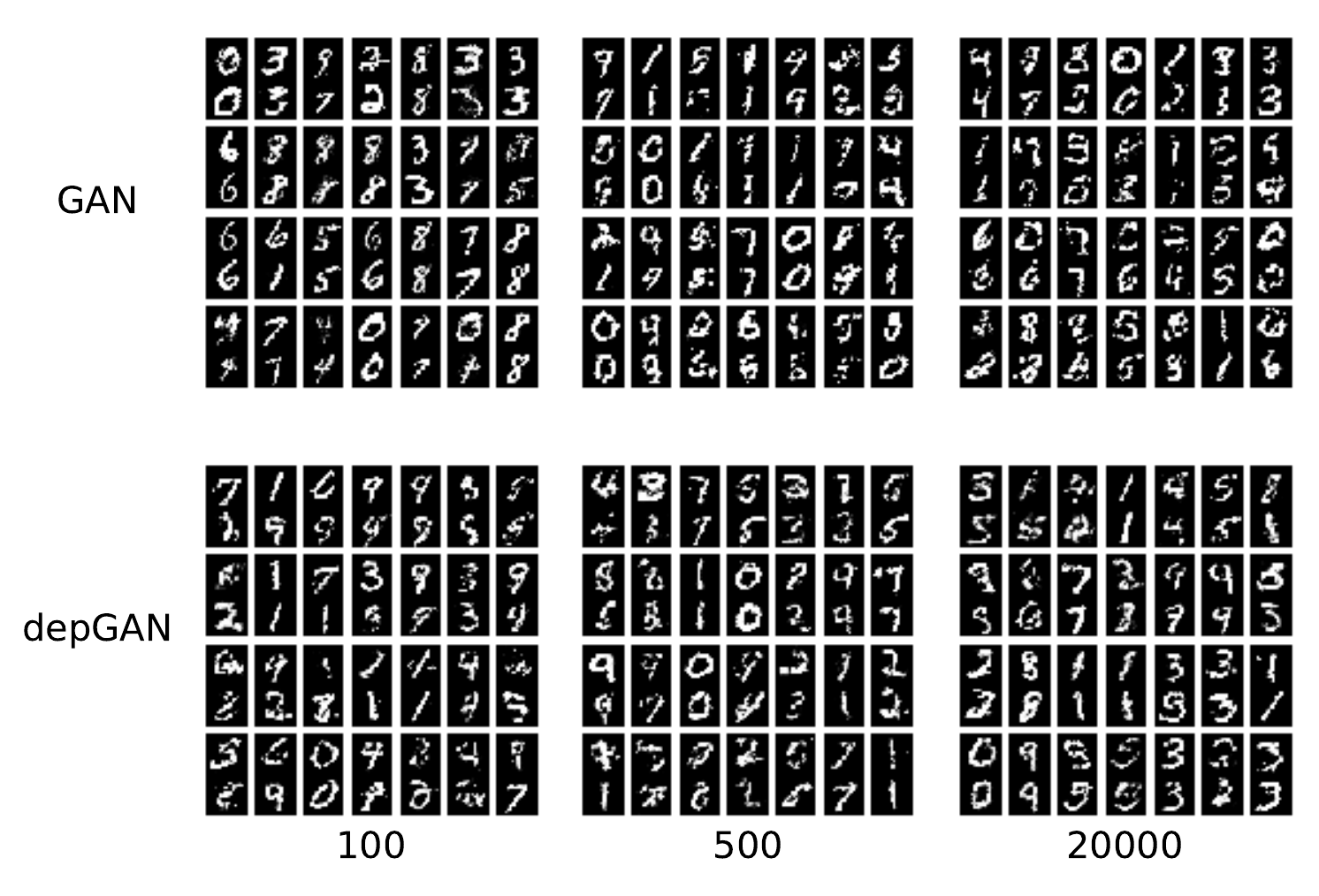}
    \caption{Paired MNIST examples generated by GAN and FactorGAN for different number of paired training samples, using $\lambda=0.9$.}
    \label{fig:mnist_examples}
\end{figure}

\clearpage

\subsubsection{Image Pairs}
\label{sec:appendix_imagepairs}

\begin{figure}[h]
    \centering
    \includegraphics[width=1.0\textwidth]{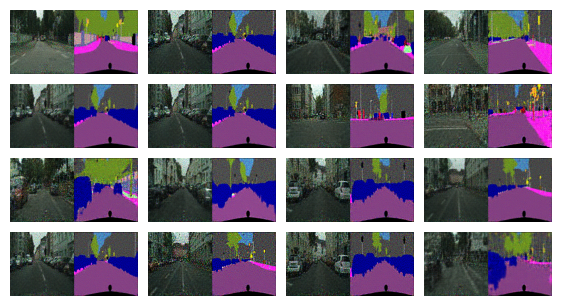}
    \caption{GAN generating image pairs for the Cityscapes dataset using 100 paired samples.}
    \label{fig:imagepairs_100_GAN}
\end{figure}

\begin{figure}[h]
    \centering
    \includegraphics[width=1.0\textwidth]{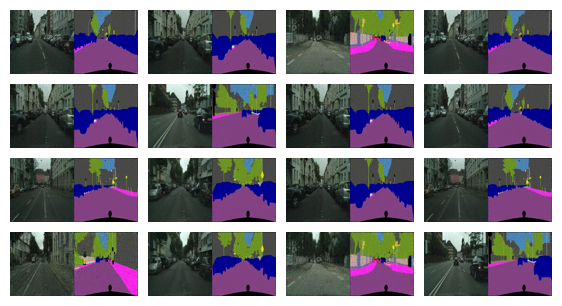}
    \caption{GAN (big) generating image pairs for the Cityscapes dataset using 100 paired samples.}
    \label{fig:imagepairs_100_GAN_big}
\end{figure}

\begin{figure}[h]
    \centering
    \includegraphics[width=1.0\textwidth]{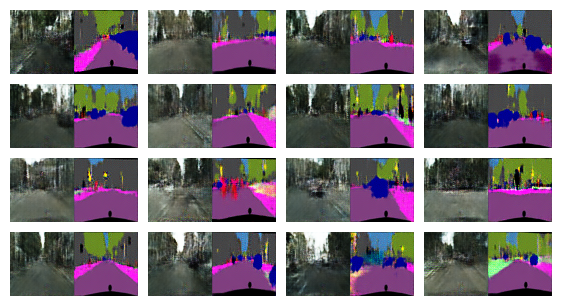}
    \caption{FactorGAN generating image pairs for the Cityscapes dataset using 100 paired samples.}
    \label{fig:imagepairs_100_FactorGAN}
\end{figure}

\begin{figure}[h]
    \centering
    \includegraphics[width=1.0\textwidth]{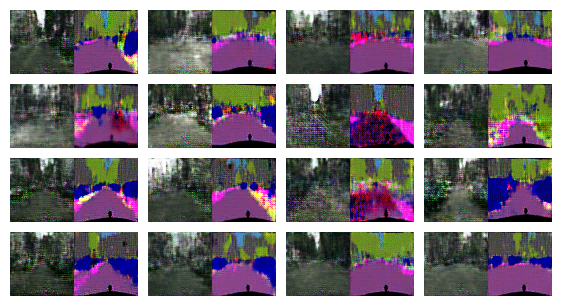}
    \caption{GAN generating image pairs for the Cityscapes dataset using 1000 paired samples.}
    \label{fig:imagepairs_1000_GAN}
\end{figure}

\begin{figure}[h]
    \centering
    \includegraphics[width=1.0\textwidth]{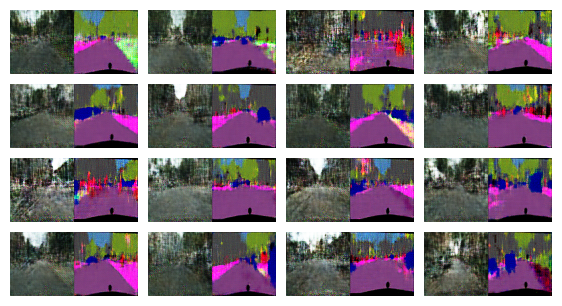}
    \caption{GAN (big) generating image pairs for the Cityscapes dataset using 1000 paired samples.}
    \label{fig:imagepairs_1000_GAN_big}
\end{figure}

\begin{figure}[h]
    \centering
    \includegraphics[width=1.0\textwidth]{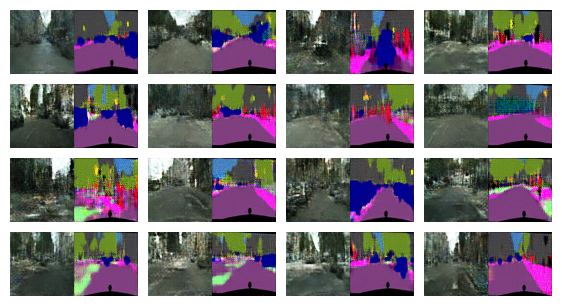}
    \caption{FactorGAN generating image pairs for the Cityscapes dataset using 1000 paired samples.}
    \label{fig:imagepairs_1000_FactorGAN}
\end{figure}

\begin{figure}[h]
    \centering
    \includegraphics[width=1.0\textwidth]{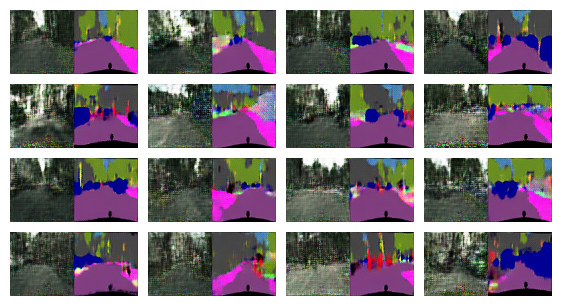}
    \caption{GAN generating image pairs using the full Cityscapes dataset.}
    \label{fig:imagepairs_All_GAN}
\end{figure}

\begin{figure}[h]
    \centering
    \includegraphics[width=1.0\textwidth]{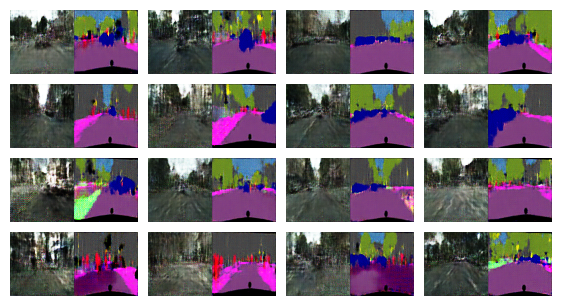}
    \caption{GAN (big) generating image pairs using the full Cityscapes dataset.}
    \label{fig:imagepairs_All_GAN_big}
\end{figure}

\begin{figure}[h]
    \centering
    \includegraphics[width=1.0\textwidth]{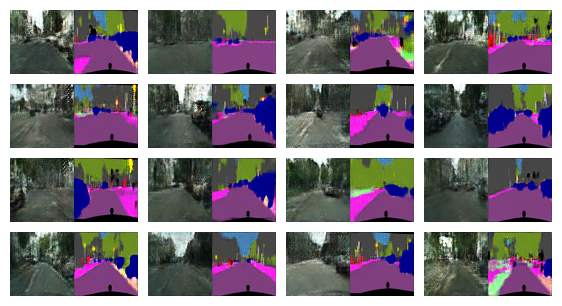}
    \caption{FactorGAN generating image pairs using the full Cityscapes dataset.}
    \label{fig:imagepairs_All_FactorGAN}
\end{figure}

\begin{figure}[h]
    \centering
    \begin{subfigure}[t]{0.48\textwidth}
        \centering
        \includegraphics[width=1.0\textwidth]{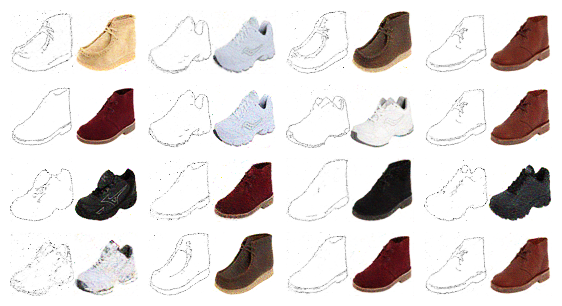}
        \caption{GAN}
        \label{fig:imagepairs_examples_100_GAN}
    \end{subfigure}%
    \hfill
    \begin{subfigure}[t]{0.48\textwidth}
        \centering
        \includegraphics[width=1.0\textwidth]{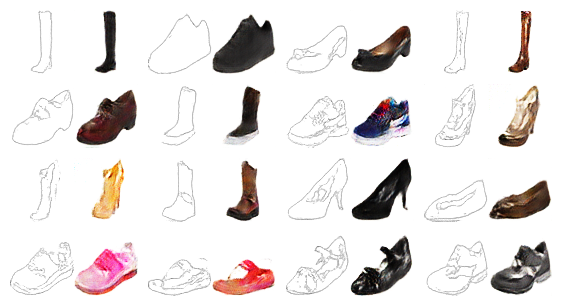}
        \caption{FactorGAN}
        \label{fig:imagepairs_examples_100_factorGAN}
    \end{subfigure}
    \caption{Image pairs generated for the Edges2Shoes dataset using 100 paired samples.}
    \label{fig:imagepairs_examples_100}
\end{figure}

\begin{figure}[h]
    \centering
    \begin{subfigure}[t]{0.48\textwidth}
        \centering
        \includegraphics[width=1.0\textwidth]{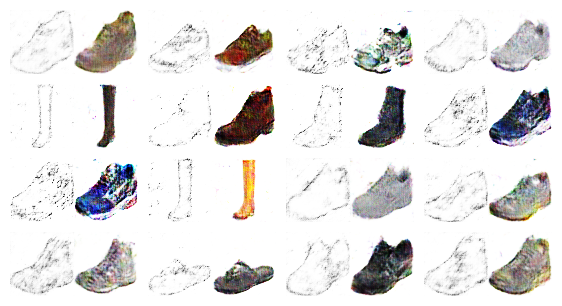}
        \caption{GAN}
        \label{fig:imagepairs_examples_1000_GAN}
    \end{subfigure}%
    \hfill
    \begin{subfigure}[t]{0.48\textwidth}
        \centering
        \includegraphics[width=1.0\textwidth]{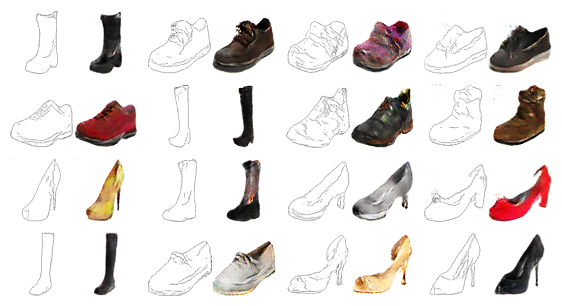}
        \caption{FactorGAN}
        \label{fig:imagepairs_examples_1000_factorGAN}
    \end{subfigure}
    \caption{Image pairs generated for the Edges2Shoes dataset using 1000 paired samples.}
    \label{fig:imagepairs_examples_1000}
\end{figure}

\begin{figure}[h]
    \centering
    \begin{subfigure}[t]{0.48\textwidth}
        \centering
        \includegraphics[width=1.0\textwidth]{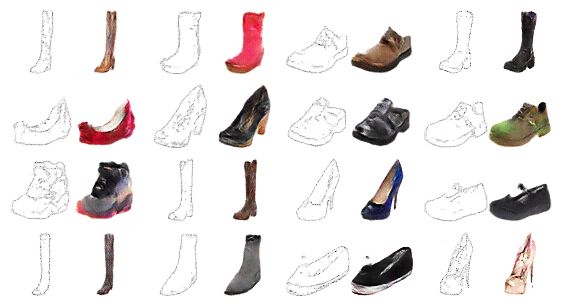}
        \caption{GAN}
        \label{fig:imagepairs_examples_all_GAN}
    \end{subfigure}%
    \hfill
    \begin{subfigure}[t]{0.48\textwidth}
        \centering
        \includegraphics[width=1.0\textwidth]{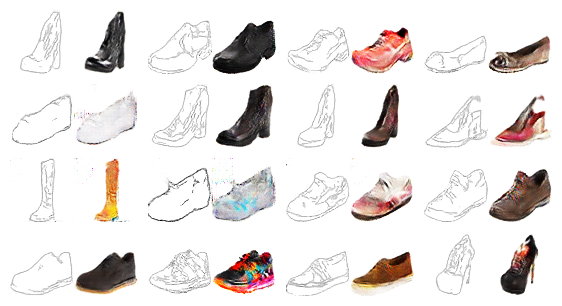}
        \caption{FactorGAN}
        \label{fig:imagepairs_examples_all_factorGAN}
    \end{subfigure}
    \caption{Image pairs generated for the Edges2Shoes dataset using all samples as paired.}
    \label{fig:imagepairs_examples_all}
\end{figure}
\vspace{128in}

\clearpage

\subsection{Image segmentation}
\label{sec:appendix_image_segmentation}

\begin{figure}[h]
    \centering
    \begin{subfigure}[t]{0.5\textwidth}
        \centering
        \includegraphics[width=1.0\textwidth]{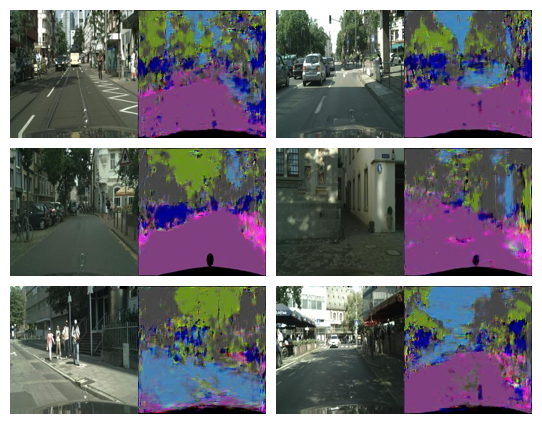}
        \caption{GAN}
        \label{fig:cityscapes_gens_gan_100}
    \end{subfigure}%
    ~ 
    \begin{subfigure}[t]{0.5\textwidth}
        \centering
        \includegraphics[width=1.0\textwidth]{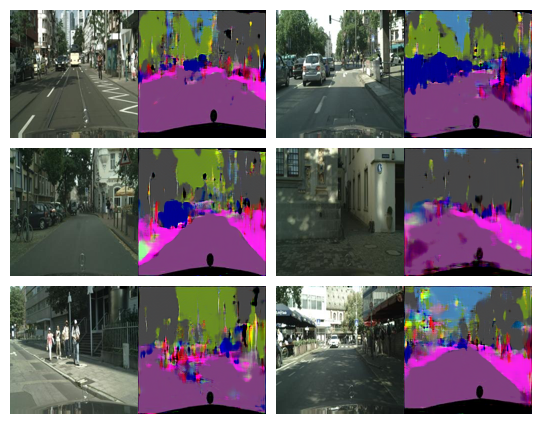}
        \caption{FactorGAN}
        \label{fig:cityscapes_gens_factorGAN_100}
    \end{subfigure}
    \caption{Segmentation predictions made on the Cityscapes dataset for the same set of test inputs, compared between models, using $100$ paired samples for training}
    \label{fig:cityscapes_examples_100}
\end{figure}

\begin{figure}[h]
    \centering
    \begin{subfigure}[t]{0.5\textwidth}
        \centering
        \includegraphics[width=1.0\textwidth]{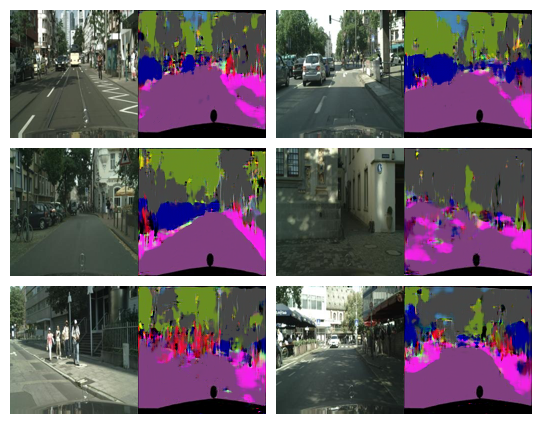}
        \caption{GAN}
        \label{fig:cityscapes_gens_gan_1000}
    \end{subfigure}%
    ~ 
    \begin{subfigure}[t]{0.5\textwidth}
        \centering
        \includegraphics[width=1.0\textwidth]{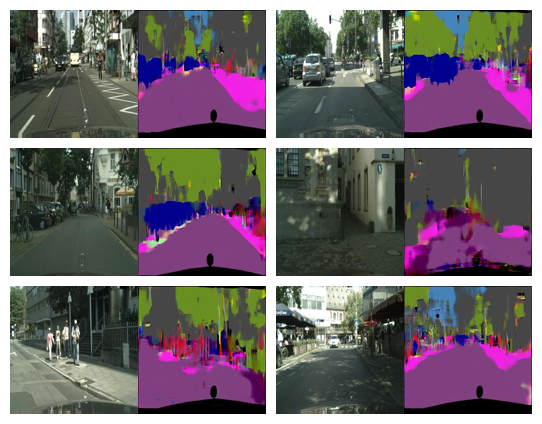}
        \caption{FactorGAN}
        \label{fig:cityscapes_gens_factorGAN_1000}
    \end{subfigure}
    \caption{Segmentation predictions made on the Cityscapes dataset for the same set of test inputs, compared between models, using $1000$ paired samples for training}
    \label{fig:cityscapes_examples_1000}
\end{figure}

\begin{figure}[h]
    \centering
    \begin{subfigure}[t]{0.5\textwidth}
        \centering
        \includegraphics[width=1.0\textwidth]{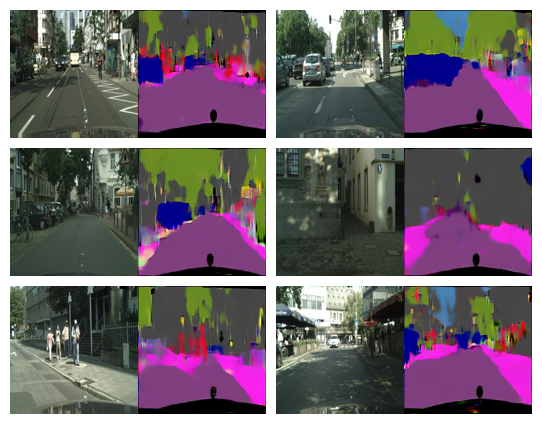}
        \caption{GAN}
        \label{fig:cityscapes_gens_gan_all}
    \end{subfigure}%
    ~ 
    \begin{subfigure}[t]{0.5\textwidth}
        \centering
        \includegraphics[width=1.0\textwidth]{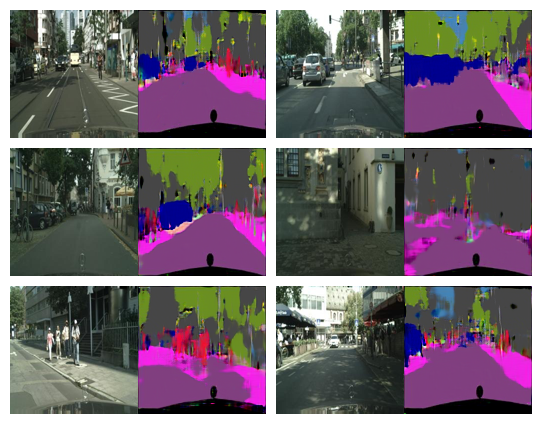}
        \caption{FactorGAN}
        \label{fig:cityscapes_gens_factorGAN_all}
    \end{subfigure}
    \caption{Segmentation predictions made on the Cityscapes dataset for the same set of test inputs, compared between models, using all paired samples for training}
    \label{fig:cityscapes_examples_all}
\end{figure}

\begin{figure}[h]
    \centering
    \includegraphics[width=1.0\textwidth]{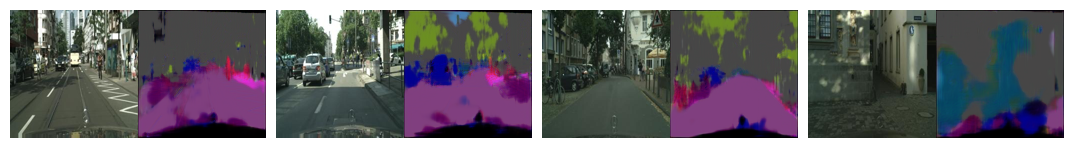}
    \caption{CycleGAN generating image pairs for the Cityscapes dataset without any paired samples.}
    \label{fig:imagepairs_100_CycleGAN_big}
\end{figure}

\end{document}